\documentclass{article}
\usepackage{preprint,times}

\usepackage{hyperref}
\usepackage{url}
\usepackage{booktabs}
\usepackage{graphicx}

\usepackage{tabularx}
\usepackage{tcolorbox}

\usepackage{amsmath}
\usepackage{amssymb}
\usepackage{amsthm}

\usepackage{xspace}

\newtheorem*{theorem*}{Theorem}
\newtheorem{proposition}{Proposition}
\newtheorem*{proposition*}{Proposition}
\newtheorem{remark}{Remark}

\newcommand{\method}{{Harness Primitives\xspace}}
\newcommand{\ours}{{STITCH\xspace}}

\title{Composing Task-specific Agent Harnesses at Test Time with Reusable Primitives}

\author{
  Peng Kuang$^{1}$,
  Haibo Jin$^{1}$,
  Dehao Wu$^{1}$,
  Feiyang Deng$^{2}$,
  Xiaopeng Yuan$^{1}$,\\
  \textbf{Jerry Wang$^{1}$,}
  \textbf{Haohan Wang$^{1}$}\\
  $^1$University of Illinois Urbana-Champaign,
  $^2$University of Michigan, Ann Arbor \\
  \texttt{pengk2@illinois.edu,}
  \texttt{haohanw@illinois.edu}
}

\usepackage{enumitem}
\usepackage{subcaption}
\setlist[itemize]{noitemsep,leftmargin=*,topsep=0pt}
\setlist[enumerate]{noitemsep,leftmargin=*,topsep=0pt}

\iclrfinalcopy
\begin{document}

\maketitle

\begin{abstract}
Agent harnesses govern how large language models (LLMs) gather context, invoke tools, verify results, preserve state, and terminate, largely affecting agent performance. However, the value of each harness mechanism can differ across heterogeneous tasks: a mechanism that improves one task may impose overhead or context distraction on another, leading to the suboptimality of a global harness. We characterize this suboptimality as a mismatch induced by fixed mechanism choices, motivating task-specific harness construction. Nonetheless, generating harness code for each task introduces generation and debugging costs, with execution risks that can compound as more mechanisms are generated. To address those challenges, we introduce \textbf{\method}, reusable harness mechanisms with clear application scope and composition contract mined from failed task trajectories. Based on \method, we propose \textbf{\ours}, a framework that \textbf{S}elects suitable primitives given \textbf{T}ask \textbf{I}nformation and compiles them into \textbf{T}ask-spe\textbf{C}ific \textbf{H}arnesses at test time. This separation enables task-specific harnesses without generating or repairing mechanism code at test time. Extensive experiments demonstrate that \ours~not only improves harness adaptability and robustness, but also scales with the primitive library size, boosting task success rates by up to 12 points over fixed harness baselines, surpassing human-designed harnesses like Codex CLI while maintaining a minimal test-time harness composition overhead of only 2.7\%, 638 times more efficient than generating task-specific harnesses from scratch. Ultimately, our work demonstrates that building task-adaptive harnesses can be beneficial for completing diverse tasks and that building reusable primitives can be a promising path towards this goal.

\end{abstract}

\section{Introduction}
\label{sec:intro}
For large language model (LLM) agents, successful task execution depends not only on the capabilities of the underlying model, but also on the \emph{agent harness} that governs how those capabilities are used, where the task success rate can differ significantly when the same model is paired with different harnesses~\citep{lee2026metaharnessendtoendoptimizationmodel,lee2026recursiveharnessselfimprovement}. 
The harness determines how context is assembled, the interaction protocols between the actor and the environment, and when execution terminates. 
Modern agent harnesses therefore incorporate a range of mechanisms in those aspects that the model needs to complete the tasks. In practice, these mechanisms are typically organized within a shared harness design that is fixed across tasks~\citep{yang2026betterharnessessmallermodels,sengupta2026harborautomatedharnessoptimization}.

However, the effect of each mechanism can differ in heterogeneous tasks, leading to the suboptimality of a global harness \citep{zhang2026jitagentscalingharnessintelligence,chen2026harnessxcomposableadaptiveevolvable}. We characterize this suboptimality as a \textit{task–mechanism mismatch} induced by fixed mechanism choices: whenever a mechanism has positive utility within its application scope and incurs a penalty outside it, either fixed choice is strictly suboptimal to task-conditioned activation of that mechanism. (Section \ref{sec:theory}) This highlights the theoretical advantage of and the need for \emph{task-specific harness engineering}, in which the mechanisms governing execution are built according to the requirements of each task.

To realize this advantage, task-specific harness engineering faces a practical challenge: generating executable harness code on the fly can be brittle, introducing independent risks of syntax errors, logic flaws, and interface mismatches, along with generation and debugging costs \citep{zhang2026jitagentscalingharnessintelligence}. We further analyzed how these execution risks compound in Proposition \ref{thm:dominance}: under an independent mechanism-failure model, the probability of a generated harness remaining executable decays exponentially with a growing number of generated mechanisms. Thus, a task-adaptive harness that is not only flexible, but also robust to execution errors during test-time harness construction is needed.

Motivated by the challenge, we introduce \textbf{\method}, reusable harness mechanisms with clear application scope and composition contract. Specifically, we mine failed execution trajectories to identify recurring harness deficiencies, proposing and implementing new primitives to address each gap. Based on \method, we propose \textbf{\ours}~to compose task-specific harnesses at test time. \ours~includes a harness composer that evaluates the task instructions and execution contexts against the application scopes to select a subset of beneficial primitives called \textit{composition intent}, and a deterministic compiler then takes this composition intent and wires the selected primitives into an executable harness. This two-step composition separates task-conditioned primitive selection from dependency handling and executable graph construction, allowing the composer to focus on matching primitives to task requirements.

We evaluate \ours~on agent benchmarks including SWE-bench Verified~\citep{jimenez2024swebenchlanguagemodelsresolve} and Terminal-Bench 2~\citep{merrill2026terminalbenchbenchmarkingagentshard}. As a basis for comparison, the experiments test \ours~against baselines including Mini-SWE-agent~\citep{yang2024sweagentagentcomputerinterfacesenable}, Codex CLI, MemoHarness~\citep{huang2026memoharnessagentharnesseslearn}, and MetaHarness~\citep{lee2026metaharnessendtoendoptimizationmodel}. Results show that the success rate of \ours~scales with the size of the \method~library, outperforming fixed harness baselines by 12 points, including human-designed harnesses like Codex CLI. In terms of efficiency, the process of composition incurs an overhead of 2.7\% of the cost of execution. This overhead constitutes at least a reduction by a factor of 638 compared to generation of code from scratch. For reliability, tests indicate the system activates components at a rate of 100 percent, showing the compiled primitives are not only valid but genuinely taking effect during actor execution. Regarding generalization, \ours~functions across models of actors and domains of tasks, further supporting the scalability of \ours. In summary, our main contributions are as follows:
\begin{itemize}
\item \textbf{Mechanism-level analysis of task-specific harness construction.} We show that task-dependent control requirements induce a task-mechanism mismatch for fixed mechanism choices and that the probability of generating a valid harness can decay exponentially with more mechanisms.
\item \textbf{Primitive-based task-specific harness engineering.} We develop \method~from failed execution trajectories, assigning each primitive an application scope and a composition contract. Based on this library, we introduce \ours, a test-time framework that separates task-conditioned primitive selection from deterministic compilation into executable harnesses.
\item \textbf{Empirical validation.} We evaluate our framework on SWE-bench Verified and Terminal-Bench 2. Our experiments support our analysis and design, demonstrating that \ours~is scalable on \method~library size and improves task success rates by up to 12 points over the seed harness, surpassing fixed harness baselines including human-designed architectures like Codex CLI, while maintaining a test-time computational overhead of only 2.7\%, at least 638 times less than generating the harness from scratch.
\end{itemize}

\section{Related Work}
\label{sec:relatedworks}

\textbf{Agent Harnesses.} Agent harnesses organize reasoning, tool use, memory, and feedback around a language model~\citep{ning2026codeagentharness,kuang2026kvprmefficientprocessreward}. Early systems develop reasoning--action loops and reflection~\citep{yao2023reactsynergizingreasoningacting,shinn2023reflexionlanguageagentsverbal,ICLR2026_94bd1b8c}, alongside interfaces for repository editing and environment interaction~\citep{yang2024sweagentagentcomputerinterfacesenable,wang2025openhandsopenplatformai}. For task-dependent control, HarnessX selects among complete harnesses produced by editing processor code and configurations~\citep{chen2026harnessxcomposableadaptiveevolvable}, whereas \ours~selects individual validated primitives and compiles them for each task without changing their implementations.

\textbf{Harness Optimization.} Program and workflow search~\citep{hu2025automateddesignagenticsystems,zhang2025aflowautomatingagenticworkflow} and harness optimizers such as Self-Harness~\citep{zhang2026selfharnessharnessesimprove}, Meta-Harness~\citep{lee2026metaharnessendtoendoptimizationmodel}, and AutoSaddler~\citep{park2026autosaddlerautomaticharnessoptimization} edit and evaluate candidate harnesses to select a fixed harness for all tasks, whereas our approach retains validated mechanisms in a library and \ours~selects which to execute separately for each task. MemoHarness's ~\citep{huang2026memoharnessagentharnesseslearn} Codex-based adaptation and Recursive Harness Self-Improvement ~\citep{lee2026recursiveharnessselfimprovement} change textual instructions and rely on the agent to follow the requested workflow descriptions, whereas \ours~connects implemented operations so that harness code determines and enforces the selected mechanisms to be executed. Test-Time Harness Evolution ~\citep{nie2026tthetesttimeharnessevolution} rewrites Python harness code from test-batch traces, while JIT-Agent ~\citep{zhang2026jitagentscalingharnessintelligence} generates task-specific modules and repairs execution errors, both of which suffer from the vulnerability and cost of code generation. \ours~instead constructs each task's harness by selecting and connecting existing implementations, so test-time adaptation requires neither generating nor repairing mechanism code.

\section{Theoretical Analysis of Task-Adaptive Harnesses}
\label{sec:theory}

We analyze static harnesses, raw code generation, and primitive-based composition under a simplified model. Let $\mathcal{X}$ denote the space of tasks drawn from a distribution $\mathcal{D}$. An agent harness $H$ has task success probability $R(x,H)\in[0,1]$ conditional on valid execution. Let $V(H)\in\{0,1\}$ indicate the absence of the modeled implementation failures. We assign zero utility to invalid harnesses and write $J(x,H)=V(H)R(x,H)$. Expectations below include task sampling and any randomness in primitive selection and harness generation.

We formalize our primitive library as $\mathcal{P} = \{p_1, \dots, p_M\}$. Each primitive $p_i$ is characterized by an \textit{application scope} $\Omega_i \subseteq \mathcal{X}$ with task prevalence $\mu_i = \mathbb{P}_{x \sim \mathcal{D}}(x \in \Omega_i) \in (0, 1)$. For each task $x \in \Omega_i$, activating $p_i$ yields a marginal task utility $u_i(x) > 0$, with conditional mean $\bar{u}_i = \mathbb{E}[u_i(x) \mid x \in \Omega_i] > 0$. For each task $x \notin \Omega_i$, activating it incurs a task-utility penalty $c_i(x) > 0$, with conditional mean $\bar{c}_i = \mathbb{E}[c_i(x) \mid x \notin \Omega_i] > 0$. A composed harness is parameterized by an activation vector $\mathbf{a} \in \{0, 1\}^M$. Assuming valid execution ($V(H_{\mathbf{a}}) = 1$), the performance follows an additive structure: $R(x, \mathbf{a}) = R_0(x) + \sum_{i=1}^M a_i \Delta_i(x)$, where $R_0(x)$ is the baseline performance without primitives, and $\Delta_i(x) = u_i(x)$ if $x \in \Omega_i$, and $-c_i(x)$ otherwise. We assume all primitive combinations are admissible and the additive expression lies in $[0,1]$ for every task and activation vector, abstracting away interactions between primitives.

\begin{proposition}[Task-mechanism Mismatch of Fixed Activations]
\label{thm:suboptimal}
Assume $\mu_i \in (0,1)$, $\bar{u}_i > 0$, and $\bar{c}_i > 0$ for all $i \in \{1, \dots, M\}$. The expected performance of the optimal task-independent harness, $\mathbf{a}_{\text{fixed}}^* \in \arg\max_{\mathbf{a} \in \{0, 1\}^M} \mathbb{E}_{x \sim \mathcal{D}}[R(x, \mathbf{a})]$, is strictly lower than that of the oracle task-adaptive harness $\mathbf{a}^*(x)$. The suboptimality gap $\Gamma_{\text{fixed}}$ is defined as:
\begin{equation}
    \Gamma_{\text{fixed}} = \mathbb{E}_{x}[R(x, \mathbf{a}^*(x))] - \mathbb{E}_{x}[R(x, \mathbf{a}_{\text{fixed}}^*)] = \sum_{i=1}^M \min\left( \mu_i \bar{u}_i, \; (1 - \mu_i) \bar{c}_i \right) > 0
\end{equation}
\end{proposition}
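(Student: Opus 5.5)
The argument uses the additive structure to decouple the optimization across primitives, for both the oracle and the fixed harness.

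\textbf{Oracle harness.} Since $R(x,\mathbf{a}) = R_0(x) + \sum_i a_i \Delta_i(x)$ with no interaction terms, the pointwise maximizer over $\mathbf{a}\in\{0,1\}^M$ is obtained coordinate-wise. The oracle sets $a_i^*(x) = \mathbb{1}[\Delta_i(x) > 0]$. Because $u_i(x)>0$ on $\Omega_i$ and $-c_i(x)<0$ off $\Omega_i$, this gives $a_i^*(x) = \mathbb{1}[x\in\Omega_i]$. Splitting each expectation over $\Omega_i$ and its complement then yields
\[
\mathbb{E}_x[R(x,\mathbf{a}^*(x))] = \mathbb{E}_x[R_0(x)] + \sum_i \mu_i \bar u_i .
\]
Here we use that the additive expression stays in $[0,1]$ and that all quantities are bounded, so expectations are finite and linearity applies.

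\textbf{Fixed harness.} For any fixed $\mathbf{a}$, linearity gives
\[
\mathbb{E}_x[R(x,\mathbf{a})] = \mathbb{E}_x[R_0(x)] + \sum_i a_i\,\mathbb{E}_x[\Delta_i(x)],
\]
where
\[
\mathbb{E}_x[\Delta_i(x)] = \mu_i\bar u_i - (1-\mu_i)\bar c_i .
\]
The objective is separable in the coordinates $a_i$, so the optimal fixed vector takes $a_i=1$ exactly when this quantity is positive (ties can be broken arbitrarily). Its value is
\[
\mathbb{E}_x[R_0(x)] + \sum_i \max\bigl(0,\ \mu_i\bar u_i-(1-\mu_i)\bar c_i\bigr).
\]

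\textbf{The gap.} Subtracting the two values, the $R_0$ terms cancel. For each $i$, the per-primitive difference is
\[
\mu_i\bar u_i - \max(0, \mu_i\bar u_i - (1-\mu_i)\bar c_i).
\]
Using the identity $A - \max(0, A-B) = \min(A,B)$ with $A=\mu_i\bar u_i$ and $B=(1-\mu_i)\bar c_i$, this equals $\min(\mu_i\bar u_i, (1-\mu_i)\bar c_i)$. Summing over $i$ gives exactly $\Gamma_{\text{fixed}}$.

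\textbf{Strict positivity.} The assumptions $\mu_i\in(0,1)$ and $\bar u_i,\bar c_i>0$ make both arguments of every $\min$ strictly positive, so each summand is positive and $\Gamma_{\text{fixed}}>0$ as long as $M\ge 1$.

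\textbf{Main subtlety.} The step needing care is justifying that the oracle is exactly $\mathbb{1}[x\in\Omega_i]$ and that $\Delta_i$ is integrable, so the conditional means split correctly. The pointwise sign conditions $u_i(x)>0$ and $c_i(x)>0$ give the first; boundedness of $R$ in $[0,1]$ gives the second. The remaining steps are routine.
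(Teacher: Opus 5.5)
Your proposal is correct and follows the paper's proof essentially step for step: the oracle activation is the scope indicator $\mathbf{1}(x\in\Omega_i)$, the optimal fixed harness is chosen coordinate-wise from the sign of $\mu_i\bar{u}_i-(1-\mu_i)\bar{c}_i$, and the gap follows from the identity $y-\max(0,y-z)=\min(y,z)$. Your remarks on integrability (via $R\in[0,1]$ bounding each $\Delta_i$) and on needing $M\ge 1$ are harmless refinements that the paper leaves implicit.
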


\begin{remark}
Proposition~\ref{thm:suboptimal} quantifies the task-mechanism mismatch induced by fixed mechanism choices of a task-independent harness. Including a mechanism across tasks incurs an expected out-of-scope penalty of $(1-\mu_i)\bar{c}_i$, while excluding it sacrifices an expected in-scope utility of $\mu_i\bar{u}_i$. Under the stated model, task-conditioned choices avoid this fixed-choice trade-off.
\end{remark}

While Proposition~\ref{thm:suboptimal} motivates task-specific adaptation, raw code generation can introduce implementation failures. Let $k(x)=\sum_{i=1}^M\mathbf{1}(x\in\Omega_i)$ denote the number of oracle-selected mechanisms, and consider generating them in one attempt without repair. Conditional on task $x$, assume each mechanism fails independently with probability $\epsilon\in(0,1)$, and the harness is valid exactly when none of these failures occurs. Then $\mathbb{P}(V(H_{\text{gen}})=1\mid x)=(1-\epsilon)^{k(x)}$.

In contrast, a primitive-based harness selects validated primitives and composes them using a deterministic compiler. We assume the resulting compositions have valid execution, $V(H_{\text{comp}})=1$. With expectations include task sampling and any randomness in selection or generation, let $a_i$ denote the final activation of $p_i$, and define the utility-weighted false positive and false negative rates by
\begin{equation}
    \alpha_i =
    \frac{\mathbb{E}[a_i c_i(x)\mid x\notin\Omega_i]}{\bar{c}_i},
    \qquad
    \beta_i =
    \frac{\mathbb{E}[(1-a_i)u_i(x)\mid x\in\Omega_i]}{\bar{u}_i}.
\end{equation}

\begin{proposition}[Conditional Dominance of Primitive-Based Composition]
\label{thm:dominance}
Under the model in Section~\ref{sec:theory}, let $\bar{k}=\mathbb{E}_x[k(x)]=\sum_{i=1}^M\mu_i$. Primitive-based task-adaptive composition outperforms the specified alternatives under the following conditions:
\begin{enumerate}
\item \textbf{Dominance over Static Harnesses:} $\mathbb{E}[J(x,H_{\text{comp}})] >\mathbb{E}[J(x,H_{\text{fixed}}^*)]$ if and only if its utility-weighted selection loss is smaller than the fixed mismatch gap:
    \begin{equation}
        \sum_{i=1}^M
        \left[
            \mu_i\beta_i\bar{u}_i
            +(1-\mu_i)\alpha_i\bar{c}_i
        \right]
        <\Gamma_{\text{fixed}}.
    \end{equation}

\item \textbf{Dominance over Raw Code Generation:} For this part, assume $k(x)=\bar{k}$ for every task, although the selected mechanisms may differ across tasks. Suppose valid generated harnesses attain the oracle performance $R(x,\mathbf{a}^*(x))$. If $\mathbb{E}[J(x,H_{\text{comp}})]>0$ and with a small $\epsilon$, composition strictly outperforms this oracle generator if and only if,
    \begin{equation}
        \bar{k}>k^*
        =
        \frac{
            \ln\left(
                \frac{\mathbb{E}[R(x,\mathbf{a}^*(x))]}
                     {\mathbb{E}[J(x,H_{\text{comp}})]}
            \right)
        }{-\ln(1-\epsilon)}
        \approx
        \frac{1}{\epsilon}
        \ln\left(
            \frac{\mathbb{E}[R(x,\mathbf{a}^*(x))]}
                 {\mathbb{E}[J(x,H_{\text{comp}})]}
        \right).
    \end{equation}
\end{enumerate}
\end{proposition}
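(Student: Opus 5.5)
The plan is to reduce both parts to the per-primitive additive structure and then compare scalar quantities. Since $V(H_{\text{comp}})=1$ by assumption, we have $J(x,H_{\text{comp}})=R(x,\mathbf{a})$ with $\mathbf{a}$ the (possibly random) final activation vector. Linearity of expectation, applied separately on $\Omega_i$ and on its complement, gives
\begin{equation*}
\mathbb{E}[J(x,H_{\text{comp}})]=\mathbb{E}[R_0(x)]+\sum_{i=1}^M\Big(\mu_i\,\mathbb{E}[a_iu_i(x)\mid x\in\Omega_i]-(1-\mu_i)\,\mathbb{E}[a_ic_i(x)\mid x\notin\Omega_i]\Big).
\end{equation*}
Substituting the definitions of $\alpha_i$ and $\beta_i$, this becomes $\mathbb{E}[R_0]+\sum_i\big(\mu_i(1-\beta_i)\bar u_i-(1-\mu_i)\alpha_i\bar c_i\big)$. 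For the oracle I will use the fact, implicit in Proposition~\ref{thm:suboptimal}, that $a_i^*(x)=\mathbf{1}(x\in\Omega_i)$ because $u_i>0$ and $c_i>0$. This gives $\mathbb{E}[R(x,\mathbf{a}^*(x))]=\mathbb{E}[R_0]+\sum_i\mu_i\bar u_i$. Subtracting the two expressions yields the key identity
\begin{equation*}
\mathbb{E}[R(x,\mathbf{a}^*(x))]-\mathbb{E}[J(x,H_{\text{comp}})]=\sum_{i=1}^M\big[\mu_i\beta_i\bar u_i+(1-\mu_i)\alpha_i\bar c_i\big],
\end{equation*}
so the left-hand side is exactly the utility-weighted selection loss.

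For part 1, I take the fixed harness to be valid, consistent with its being a hand-built static harness. Then $\mathbb{E}[J(x,H^*_{\text{fixed}})]=\mathbb{E}[R(x,\mathbf{a}^*(x))]-\Gamma_{\text{fixed}}$ by Proposition~\ref{thm:suboptimal}. Subtracting this from the identity above shows that $\mathbb{E}[J(x,H_{\text{comp}})]>\mathbb{E}[J(x,H^*_{\text{fixed}})]$ holds exactly when the selection loss is below $\Gamma_{\text{fixed}}$. The equivalence is immediate, so this part is bookkeeping.

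For part 2, I treat validity and performance of the generated harness as follows: the generated harness is valid with probability $(1-\epsilon)^{k(x)}$, and when valid it attains $R(x,\mathbf{a}^*(x))$. With $k(x)\equiv\bar k$, the factor $(1-\epsilon)^{\bar k}$ is the same for every task, so I write $\mathbb{E}[J(x,H_{\text{gen}})]=(1-\epsilon)^{\bar k}\,\mathbb{E}[R(x,\mathbf{a}^*(x))]$.

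\textbf{Main obstacle.} This step requires the failure event to be independent of the task's performance given $x$, which follows from the conditional-on-$x$ failure model. The constant-$k$ assumption is what lets the factor leave the expectation. Without it we would get $\mathbb{E}[(1-\epsilon)^{k(x)}R]$, which does not factor, so I will state explicitly where the assumption is used.

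The comparison then reads $\mathbb{E}[J_{\text{comp}}]>(1-\epsilon)^{\bar k}\mathbb{E}[R^*]$. Since $\mathbb{E}[J_{\text{comp}}]>0$, I can divide and take logarithms. Because $\ln(1-\epsilon)<0$, dividing by it flips the inequality and gives $\bar k>k^*$. The converse follows by reversing each step, since every step is an equivalence. If $\mathbb{E}[R^*]=0$, then $\mathbb{E}[J_{\text{comp}}]\le\mathbb{E}[R^*]$ contradicts positivity, so this degenerate case is excluded. Finally, the approximation $-\ln(1-\epsilon)=\epsilon+O(\epsilon^2)$ gives $k^*\approx\epsilon^{-1}\ln\big(\mathbb{E}[R^*]/\mathbb{E}[J_{\text{comp}}]\big)$ for small $\epsilon$. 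I note that $k^*\ge 0$ by the identity above, since the selection loss is nonnegative.
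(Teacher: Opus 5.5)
Your proposal is correct and follows essentially the same route as the paper. You decompose $\mathbb{E}[a_i\Delta_i(x)]$ over $\Omega_i$ and its complement, which shows that $\mathbb{E}[R(x,\mathbf{a}^*(x))]-\mathbb{E}[J(x,H_{\text{comp}})]$ equals the selection loss; for Part 1 you compare this with $\mathbb{E}[R(x,\mathbf{a}^*(x))]-\Gamma_{\text{fixed}}$ from Proposition~\ref{thm:suboptimal}, and for Part 2 you factor out $(1-\epsilon)^{\bar k}$ under constant $k(x)$ and take logarithms using $0<\mathbb{E}[J(x,H_{\text{comp}})]\le\mathbb{E}[R(x,\mathbf{a}^*(x))]$. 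You also state explicitly that $H^*_{\text{fixed}}$ is valid, an assumption the paper makes implicitly when it invokes Proposition~\ref{thm:suboptimal}.
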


\begin{remark}
Part 1 shows that the composer need not be perfect: its selection loss need only remain below the mismatch gap of the best task-independent configuration. Part 2 characterizes a reliability trade-off under constant mechanism count and independent implementation failures. These conditional results motivate separating task-conditioned selection from implementation generation; they do not establish compiler correctness or superiority over code generation with repair.
\end{remark}

\section{Methodology}
\label{sec:method}

Our approach separates the development of \method~from its test-time composition by \ours. As shown in Figure~\ref{fig:overview}, primitive development converts recurring execution failures into a library of implemented and validated mechanisms, each with an application scope and a composition contract (Section~\ref{sec:development}). Given this library and task information, \ours~uses a harness composer to select primitives and a deterministic compiler to connect them into a task-specific harness (Section~\ref{sec:composition}). This separation allows mechanism choices to vary across tasks while reusing their implementations.

\begin{figure}[t]
    \centering
    \includegraphics[width=1\linewidth]{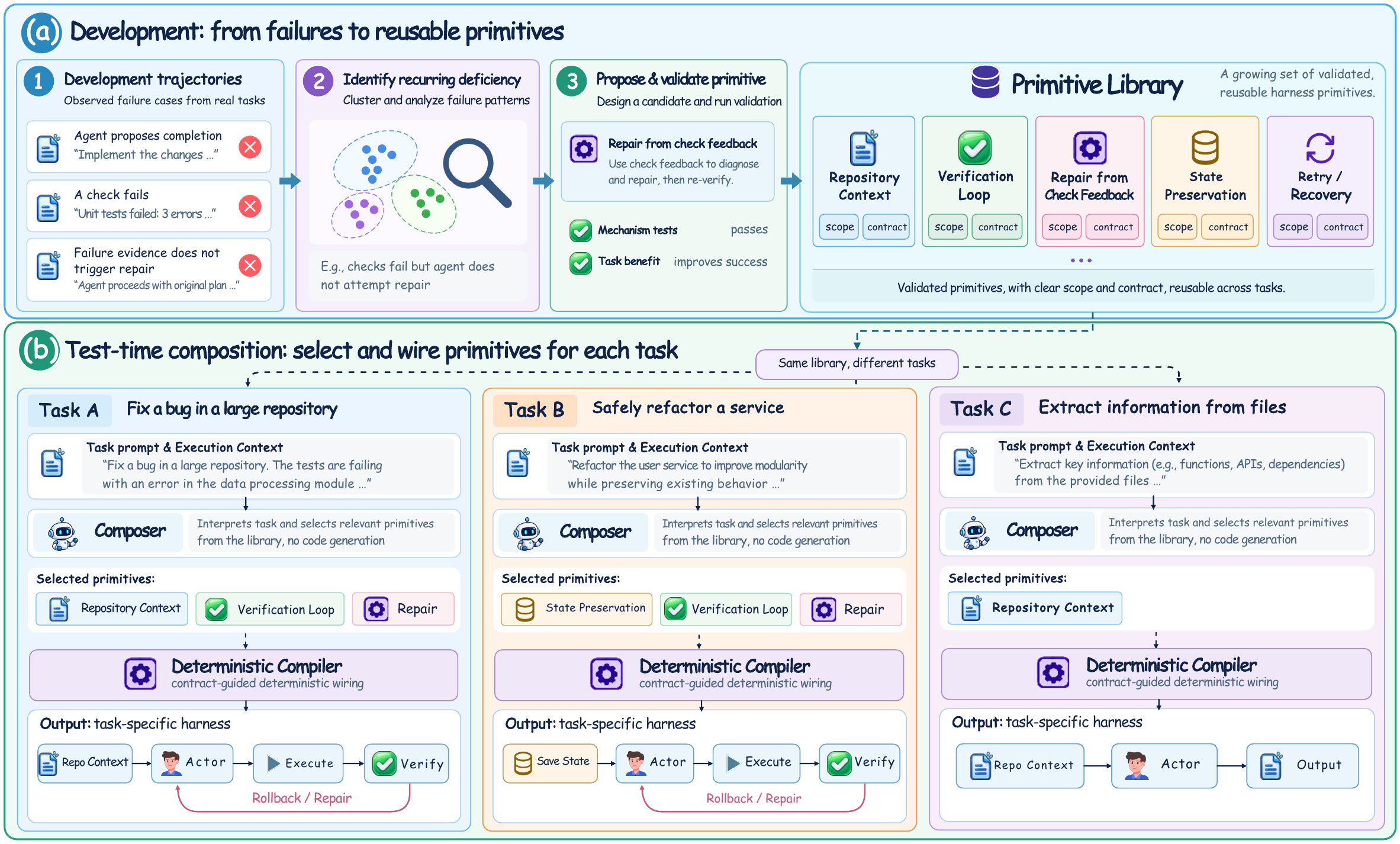}
    \vspace{-5mm}
\caption{Overview of \ours. Development stage turns recurring failures into a library of implemented primitives, and test-time harness composition selects and connects these primitives for individual tasks.}

    \label{fig:overview}
\end{figure}

\subsection{Preliminaries}
\label{sec:problem}

\textbf{Agent Harness.} Let $x$ denote a task instruction and $c_x$ an execution specification describing the task-solving language model, its interaction interface, and the task environment setup. We call the task-solving model the \emph{actor}. A \emph{harness} $H$ is a control program that mediates the actor's interaction with the environment: it constructs model requests, executes actions through available tools, returns observations, and determines when to terminate. Together, the actor, harness, and environment produce an execution trajectory $\tau$ for success evaluation.

\textbf{Task-specific harness construction.} The problem is to design a construction procedure $F$ that maps the available task information to a harness $H_x=F(x,c_x)$. The objective is to improve the probability of successful task completion while holding the actor model and environment fixed. Unlike using a shared harness across tasks, this formulation allows the harness program to depend on the individual task. Let $\mathcal{H}(c_x)$ be the set of compatible harnesses, $p(\cdot\mid x,c_x,H)$ the induced trajectory distribution, and $R_x(\tau)\in\{0,1\}$ indicate task success. The goal is to find a task-specific harness $H_x^\star$ that satisfies:
\begin{equation}
H_x^\star \in
\underset{H\in\mathcal{H}(c_x)}{\arg\max}\;
\mathbb{E}_{\tau\sim p(\cdot\mid x,c_x,H)}
\bigl[R_x(\tau)\bigr].
\label{eq:harness-objective}
\end{equation}

\textbf{Primitives and harness composition.} We instantiate harness construction through a library $\mathcal{P}$ of \emph{primitives}: implemented, reusable control operations, each with an application scope and a composition contract. The scope describes when the primitive is expected to help, while the contract specifies its inputs, outputs, environment requirements, and dependencies. We represent a harness $H$ as a directed graph $G=(V,E)$, where $V$ is the set of control-operation nodes and $E$ is the set of directed edges specifying execution order and conditional branches. Each operation node invokes a primitive, and operations communicate through shared execution state containing the interaction history and intermediate results. This representation allows the same operations to be reused in different task-specific control flows. Details in Appendix \ref{app:primitive-example}.

\subsection{Primitive Development}
\label{sec:development}

The Primitive Harness is initialized with a development set $\mathcal{D}_{\mathrm{dev}}$ and a \emph{seed harness} $H_{\mathrm{seed}}$. The development process starts with failure-motivated primitive propositions. Then, primitive propositions are implemented, verified, and revised until a library of useful primitives is built. Finally, each accepted primitive is assigned an application scope and composition contract to guide the primitive selection and harness composition at test time, as detailed in Section \ref{sec:composition}.

\textbf{Primitive proposal: From failures to reusable mechanisms.} Primitive development takes a development set $\mathcal{D}_{\mathrm{dev}}$ and a seed harness $H_{\mathrm{seed}}$ as input for collecting failed trajectories. A \emph{development agent}, an LLM agent responsible for proposing, implementing, and evaluating reusable control mechanisms, runs the actor with $H_{\mathrm{seed}}$ on tasks in $\mathcal{D}_{\mathrm{dev}}$ and collects the resulting trajectories $\tau$ and task outcomes $R_x(\tau)$. The development agent is distinct in role from the task-solving actor. It analyzes failed attempts, for which $R_x(\tau)=0$, to identify recurring deficiencies. For each deficiency, the development agent proposes a mechanism, specifies when it should activate and which subsequent actor decision it should influence, and implements the corresponding primitives.

\textbf{Primitive validation and revision.} Our validation protocol separates mechanism correctness from task benefit. Mechanism tests check whether the intended primitive operation activates and whether its output changes the relevant request, action, or control-flow decision. Matched development runs then compare the trajectory of a harness with and without the candidate primitive to assess its benefit in task success $R_x(\tau)$. The evidence informs iterative revision and library acceptance. Only primitives with a positive effect on the performance are accepted, and the process terminates when the number of valid primitives within the library passes a given threshold $\theta$.

\textbf{Scope and composition contracts.} For each accepted primitive, we assign an \emph{application scope}, describing when it is expected to facilitate the task and when its overhead or behavior may be undesirable according to its success on the evaluated tasks, and a \emph{composition contract}, specifying its inputs, outputs, required environment setup, and dependencies. Scope descriptions guide the test-time harness composer, which selects primitives, while contracts guide the implementation of the harness compiler, which connects them into a harness.

\subsection{Test-Time Harness Composition}
\label{sec:composition}

After development, the accepted primitive library $\mathcal{P}$, application scopes, and composition contracts are fixed for test-time use. Given a task $x$ and its execution specification $c_x$, a harness composer selects suitable primitives, and a deterministic harness compiler connects them into an executable harness. This process instantiates the construction procedure $F$ in Section~\ref{sec:problem}. We denote the abstraction of the seed harness $H_{\mathrm{seed}}$ as a graph $G_{\mathrm{seed}}$ and the resulting task-specific graph by $G_x$, which represents $H_x=F(x,c_x)$.

\textbf{Primitive selection.} Before the first actor request, the harness composer $\pi$ with the same backbone LLM as the actor, uses the application scopes to match the task's demands to the accepted primitives and their contracts to account for environment requirements and dependencies. Specifically, the composer receives $x$, $c_x$, $G_{\mathrm{seed}}$, and the library descriptions. It outputs a \emph{composition intent} $I_x$, specifying which primitives to use at a design level rather than generating their implementation code from scratch or handling the integration of the primitives, which could be brittle as well. Example schema and implementation details can be found in Appendix \ref{app:composition-intent} and Appendix \ref{app:composer-prompt}.

\textbf{Contract-guided compilation.} Let $K$ denote the deterministic compiler and $\widetilde{G}_x$ the candidate graph it constructs from the intent. Selection and compilation are given by
\begin{equation}
\begin{aligned}
I_x = \pi(x,c_x;G_{\mathrm{seed}},\mathcal{P}), ~~\widetilde{G}_x = K(G_{\mathrm{seed}},I_x;\mathcal{P},c_x)
\end{aligned}
\label{eq:intent-compilation}
\end{equation}
The compiler connects the selected primitives to the seed graph and supplies the dependencies required by their composition contracts. For example, selecting a repository map, which summarizes task-relevant files and symbols, also requires inserting that map into the actor's request. The compiler supplies this connection without requiring the composer to specify it. The candidate composition intent is checked for supported primitives, compatibility with $c_x$, valid connections, reachable termination, and bounded loops. If considered invalid, the diagnostics are returned to the composer for bounded correction. If correction fails, the system retains the seed harness, $G_{\mathrm{seed}}$. Finally, the compiler builds the task-specific harness according to the valid graph.

\section{Experiments}

\subsection{Setup}

\textbf{Benchmark and metrics.} We evaluate on SWE-bench Verified~\citep{jimenez2024swebenchlanguagemodelsresolve}, which requires repairing real repository issues, and Terminal-Bench 2~\citep{merrill2026terminalbenchbenchmarkingagentshard}, which covers tasks performed in a terminal environment. The main tables report 100 randomly sampled SWE-bench tasks, grouped into Django, SymPy, and other repositories, and 45 randomly sampled Terminal-Bench tasks, grouped by difficulty. With $k=2$ attempts per task, we report Pass@1, the fraction of successful attempts; Pass@$k$, the fraction of tasks solved in at least one attempt~\citep{chen2021evaluatinglargelanguagemodels}; and Pass\textasciicircum{k}, the fraction solved in all $k$ attempts.

\textbf{Implementation details.} For \ours~and the harness optimization baselines, MemoHarness~\citep{huang2026memoharnessagentharnesseslearn} and MetaHarness~\citep{lee2026metaharnessendtoendoptimizationmodel}, we use Mini-SWE-agent 2.4.6~\citep{yang2024sweagentagentcomputerinterfacesenable} as the common seed harness and GPT-5.6-Sol as the development model. GPT-5.6-Luna with medium reasoning effort serves as the task-solving actor in the main experiments and as the primitive selector for \ours. In our analysis, we also use GPT-5.6-Terra with medium reasoning, DeepSeek-V4-Flash with no reasoning, and Claude-4.5-Haiku as out-of-domain actors models not used during development. We compare against the fixed Mini-SWE-agent, SWE-agent~\citep{yang2024sweagentagentcomputerinterfacesenable}, and Codex CLI harnesses, the two harness optimization baselines, and harnesses that apply fixed primitives across all tasks (Appendix \ref{app:primitive-catalog}). The development task set is sampled randomly, 100 from SWE-bench and 44 from Terminal-bench, both disjoint from the evaluation samples.

\subsection{Main Results}

\textbf{Task-adaptive composition improves task success.} Tables~\ref{tab:swe-luna} and~\ref{tab:terminal-random-test} show that \ours~leads all compared methods on all three overall metrics. These results are consistent with the motivation for task-conditioned mechanism selection in Section~\ref{sec:theory}. On SWE-bench, it achieves 80.5\% Pass@1, exceeding Mini-SWE-agent by 7.5 percentage points, Codex CLI by 1.5 points, and the stronger harness optimization baseline, MetaHarness, by 7.0 points. On Terminal-Bench, it reaches 72.2\% Pass@1, improving over these baselines by 12.2, 15.5, and 16.6 points, respectively. The gains extend to repeated success: Pass$^2$ rises from Mini-SWE-agent's 68.0\% to 76.0\% on SWE-bench and from 51.1\% to 66.7\% on Terminal-Bench. Furthermore, the results also show that task-adaptive selection exploits complementary primitives. \ours~exceeds the strongest fixed primitive's overall Pass@1 by 4.5 points on SWE-bench and 5.5 points on Terminal-Bench, while the random composer baseline barely reaches the average performance of fixed primitives. Individual primitives have distinct strengths, which echoes our assumptions in Section \ref{sec:theory}: Contract-case Explorer performs best among fixed primitives on SymPy and other repositories, while Environment-capability Runner leads on Django. On Terminal-Bench, State Guard leads on medium tasks, whereas Execution Supervisor leads on hard tasks. \ours~combines this coverage, reaching 81.0\% Pass@1 on medium tasks and 46.2\% on hard tasks while retaining 100\% on easy tasks. These patterns support the value of matching control mechanisms to individual tasks.

\begin{table}[htbp]
\centering
\caption{The performance of the GPT-5.6-Luna on SWE-bench Verified.}
\label{tab:swe-luna}
\small
\setlength{\tabcolsep}{2.5pt}
\renewcommand{\arraystretch}{1.12}
\resizebox{\linewidth}{!}{%
\begin{tabular}{@{}l*{12}{r}@{}}
\toprule
Method & \multicolumn{3}{c}{Overall} & \multicolumn{3}{c}{Django} & \multicolumn{3}{c}{SymPy} & \multicolumn{3}{c}{Other repos} \\
\cmidrule(lr){2-4}\cmidrule(lr){5-7}\cmidrule(lr){8-10}\cmidrule(l){11-13}
& Pass@1 & Pass@$k$ & Pass\textasciicircum{k} & Pass@1 & Pass@$k$ & Pass\textasciicircum{k} & Pass@1 & Pass@$k$ & Pass\textasciicircum{k} & Pass@1 & Pass@$k$ & Pass\textasciicircum{k} \\
\midrule
Mini-SWE-agent (Seed Harness) & 73.0 & 78.0 & 68.0 & 75.0 & 78.8 & 71.2 & 75.0 & 78.6 & 71.4 & 69.1 & 76.5 & 61.8 \\
SWE-agent & 47.5 & 63.0 & 32.0 & 53.8 & 71.2 & 36.5 & 42.9 & 50.0 & 35.7 & 39.7 & 55.9 & 23.5 \\
Codex CLI & 79.0 & 83.0 & 75.0 & 78.8 & 80.8 & \textbf{76.9} & 64.3 & 71.4 & 57.1 & \textbf{85.3} & \textbf{91.2} & 79.4 \\
MemoHarness & 70.5 & 77.0 & 64.0 & 70.2 & 76.9 & 63.5 & 67.9 & 71.4 & 64.3 & 72.1 & 79.4 & 64.7 \\
MetaHarness & 73.5 & 79.0 & 68.0 & 74.0 & 78.8 & 69.2 & 64.3 & 71.4 & 57.1 & 76.5 & 82.4 & 70.6 \\
\midrule
\multicolumn{13}{l}{\textit{Our developed primitives (fixed across tasks)}} \\
Change-surface Tracer & 73.5 & 77.0 & 70.0 & 76.9 & 80.8 & 73.1 & 60.7 & 64.3 & 57.1 & 73.5 & 76.5 & 70.6 \\
Compatibility-envelope Gate & 72.0 & 77.0 & 67.0 & 76.9 & 80.8 & 73.1 & 57.1 & 64.3 & 50.0 & 70.6 & 76.5 & 64.7 \\
Contract-case Explorer & 72.5 & 78.0 & 67.0 & 69.2 & 76.9 & 61.5 & 71.4 & 78.6 & 64.3 & 77.9 & 79.4 & 76.5 \\
Discriminating-oracle Runner & 71.5 & 76.0 & 67.0 & 75.0 & 78.8 & 71.2 & 57.1 & 57.1 & 57.1 & 72.1 & 79.4 & 64.7 \\
Domain-trace Template & 73.0 & 77.0 & 69.0 & 76.0 & 80.8 & 71.2 & 60.7 & 64.3 & 57.1 & 73.5 & 76.5 & 70.6 \\
Environment-capability Runner & 76.0 & 79.0 & 73.0 & 77.9 & 80.8 & 75.0 & 67.9 & 71.4 & 64.3 & 76.5 & 79.4 & 73.5 \\
\midrule
Random composer & 73.3 & 78.0 & 68.6 & 75.4 & 79.6 & 71.2 & 64.3 & 69.6 & 58.9 & 73.9 & 79.0 & 68.8 \\
\textbf{\ours~(Ours)} & \textbf{80.5} & \textbf{85.0} & \textbf{76.0} & \textbf{79.8} & \textbf{84.6} & 75.0 & \textbf{75.0} & \textbf{78.6} & \textbf{71.4} & 83.8 & 88.2 & \textbf{79.4} \\
\bottomrule
\end{tabular}%
}
\end{table}

\begin{table}[htbp]
\centering
\caption{The performance of the GPT-5.6-Luna on Terminal-Bench 2.}
\label{tab:terminal-random-test}
\small
\setlength{\tabcolsep}{2.5pt}
\renewcommand{\arraystretch}{1.12}
\resizebox{\linewidth}{!}{%
\begin{tabular}{@{}l*{12}{r}@{}}
\toprule
Method & \multicolumn{3}{c}{Overall} & \multicolumn{3}{c}{Easy} & \multicolumn{3}{c}{Medium} & \multicolumn{3}{c}{Hard} \\
\cmidrule(lr){2-4}\cmidrule(lr){5-7}\cmidrule(lr){8-10}\cmidrule(l){11-13}
& Pass@1 & Pass@$k$ & Pass\textasciicircum{k} & Pass@1 & Pass@$k$ & Pass\textasciicircum{k} & Pass@1 & Pass@$k$ & Pass\textasciicircum{k} & Pass@1 & Pass@$k$ & Pass\textasciicircum{k} \\
\midrule
Mini-SWE-agent (Seed Harness) & 60.0 & 68.9 & 51.1 & 100.0 & 100.0 & 100.0 & 65.5 & 75.9 & 55.2 & 38.5 & 46.2 & 30.8 \\
SWE-agent & 13.3 & 17.8 & 8.9 & 33.3 & 33.3 & 33.3 & 13.8 & 17.2 & 10.3 & 7.7 & 15.4 & 0.0 \\
Codex CLI & 56.7 & 66.7 & 46.7 & 66.7 & 66.7 & 66.7 & 67.2 & 79.3 & 55.2 & 30.8 & 38.5 & 23.1 \\
MemoHarness & 53.3 & 66.7 & 40.0 & 83.3 & 100.0 & 66.7 & 62.1 & 75.9 & 48.3 & 26.9 & 38.5 & 15.4 \\
MetaHarness & 55.6 & 66.7 & 44.4 & 66.7 & 100.0 & 33.3 & 62.1 & 72.4 & 51.7 & 38.5 & 46.2 & 30.8 \\
\midrule
\multicolumn{13}{l}{\textit{Our developed primitives (fixed across tasks)}} \\
State Guard & 66.7 & 75.6 & 57.8 & 83.3 & 100.0 & 66.7 & 77.6 & 86.2 & 69.0 & 38.5 & 46.2 & 30.8 \\
Check Replay & 65.6 & 75.6 & 55.6 & 100.0 & 100.0 & 100.0 & 74.1 & 86.2 & 62.1 & 38.5 & 46.2 & 30.8 \\
First-Failure Localizer & 62.2 & 68.9 & 55.6 & 100.0 & 100.0 & 100.0 & 72.4 & 82.8 & 62.1 & 30.8 & 30.8 & 30.8 \\
Experiment Keeper & 58.9 & 68.9 & 48.9 & 83.3 & 100.0 & 66.7 & 65.5 & 75.9 & 55.2 & 38.5 & 46.2 & 30.8 \\
Execution Supervisor & 65.6 & 75.6 & 55.6 & 100.0 & 100.0 & 100.0 & 70.7 & 82.8 & 58.6 & 46.2 & \textbf{53.8} & 38.5 \\
\midrule
Random composer & 63.5 & 71.9 & 55.2 & 94.4 & 100.0 & 88.9 & 71.3 & 81.0 & 61.5 & 39.1 & 44.9 & 33.3 \\
\textbf{\ours~(ours)} & \textbf{72.2} & \textbf{77.8} & \textbf{66.7} & \textbf{100.0} & \textbf{100.0} & \textbf{100.0} & \textbf{81.0} & \textbf{89.7} & \textbf{72.4} & \textbf{46.2} & 46.2 & \textbf{46.2} \\
\bottomrule
\end{tabular}%
}
\end{table}

\textbf{\ours~scales with primitive library size.} We explore how the size of the primitive library affects the performance of \ours~and how scalable it is to the primitive library size. Starting from the seed harness, we gradually increase the number of primitives in the primitive library used by \ours~at test time. As shown in Figure \ref{fig:scaling}, the performance of \ours~increases steadily from 75.0\% to 80.5\% on SWE-bench and from 62.2\% to 72.2\% on Terminal-Bench as the number of primitives in the library grows, showing strong scalability of the proposed primitive-based task-adaptive harnesses.

\begin{figure}[t]
\centering
\begin{subfigure}[t]{0.45\linewidth}
    \centering
    \includegraphics[width=\linewidth]{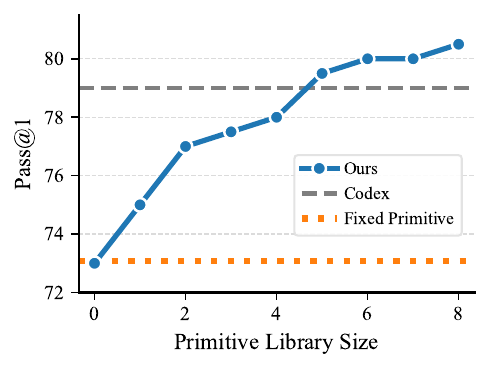}
\end{subfigure}
\begin{subfigure}[t]{0.45\linewidth}
    \centering
    \includegraphics[width=\linewidth]{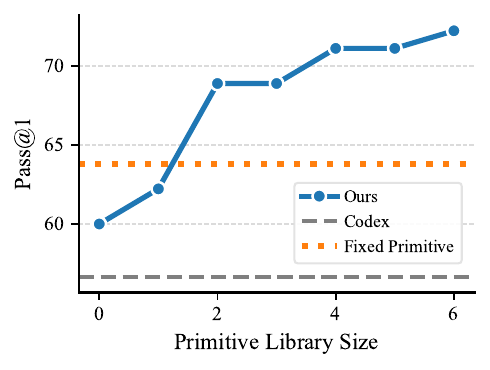}
\end{subfigure}
\caption{The scalability of \ours~on \method~Library size. \emph{Left}: The performance of \ours~on SWE-bench Verified scales with \method~Library size. \emph{Right}: The performance of \ours~on Terminal-bench 2 scales with \method~Library size.}
\label{fig:scaling}
\end{figure}

\subsection{Analysis}

\textbf{\method~generalizes across actor models.} To examine the cross-model generalization of \ours, we use out-of-domain actor models such as GPT-5.6-Terra, DeepSee-v4-Flash, and Claude-4.5-Haiku at test time, while GPT-5.6-Luna is used as the actor model during the development stage. The primitives implementation and composition contract remain unchanged while the application scope is adjusted according to the model's execution on development tasks. As shown in Table \ref{tab:cross-model}, the primitives generalize to different actor models, with the overall performance of all fixed single-primitive harnesses surpassing the seed harness. This shows that developed primitives represent generic harness control mechanisms instead of overfitting to the weaknesses of the actor model during development. Furthermore, the composer is also able to effectively select suitable primitives for the tasks, achieving the highest average performance across the evaluated actor models.

\begin{table}[htbp]
\centering
\small
\caption{Out-of-development-model actor generalization on SWE-bench Verified.}
\label{tab:cross-model}
\setlength{\tabcolsep}{2.5pt}
\renewcommand{\arraystretch}{1.12}
\resizebox{\linewidth}{!}{%
\begin{tabular}{@{}l*{12}{r}@{}}
\toprule
Method & \multicolumn{3}{c}{Overall} & \multicolumn{3}{c}{GPT-5.6-Terra} & \multicolumn{3}{c}{DeepSeek-V4-Flash} & \multicolumn{3}{c}{Claude-4.5-Haiku} \\
\cmidrule(lr){2-4}\cmidrule(lr){5-7}\cmidrule(lr){8-10}\cmidrule(l){11-13}
& Pass@1 & Pass@$k$ & Pass$^k$ & Pass@1 & Pass@$k$ & Pass$^k$ & Pass@1 & Pass@$k$ & Pass$^k$ & Pass@1 & Pass@$k$ & Pass$^k$ \\
\midrule
Mini-SWE-agent (Seed Harness) & 75.2 & 80.3 & 70.0 & 82.0 & 84.0 & 80.0 & 82.5 & 89.0 & 76.0 & 61.0 & 68.0 & 54.0 \\
SWE-agent & 73.3 & 79.7 & 67.0 & 74.5 & 83.0 & 66.0 & 89.5 & 93.0 & 86.0 & 56.0 & 63.0 & 49.0 \\
Codex CLI & 78.5 & 84.7 & 72.3 & 88.0 & 92.0 & 84.0 & 82.0 & 90.0 & 74.0 & 65.5 & 72.0 & 59.0 \\
\midrule
\multicolumn{13}{l}{\textit{Our developed primitives (fixed across tasks)}} \\
Change-surface Tracer & 77.7 & 83.7 & 71.7 & 79.5 & 84.0 & 75.0 & 87.5 & 92.0 & 83.0 & 66.0 & 75.0 & 57.0 \\
Compatibility-envelope Gate & 77.3 & 83.3 & 71.3 & 79.5 & 85.0 & 74.0 & 85.5 & 90.0 & 81.0 & 67.0 & 75.0 & 59.0 \\
Contract-case Explorer & 78.7 & 83.7 & 73.7 & 80.0 & 83.0 & 77.0 & 88.5 & 93.0 & 84.0 & 67.5 & 75.0 & 60.0 \\
Discriminating-oracle Runner & 76.3 & 81.0 & 71.7 & 77.5 & 82.0 & 73.0 & 86.5 & 90.0 & 83.0 & 65.0 & 71.0 & 59.0 \\
Domain-trace Template & 78.7 & 85.0 & 72.3 & 81.5 & 86.0 & 77.0 & 84.5 & 91.0 & 78.0 & 70.0 & 78.0 & 62.0 \\
Environment-capability Runner & 79.5 & 84.3 & 74.7 & 83.5 & 86.0 & 81.0 & 89.0 & 93.0 & 85.0 & 66.0 & 74.0 & 58.0 \\
\midrule
Random composer & 77.4 & 82.8 & 71.9 & 80.6 & 84.4 & 76.9 & 85.9 & 90.9 & 80.9 & 65.6 & 73.3 & 58.0 \\
\ours~(Ours) & \textbf{84.0} & \textbf{88.3} & \textbf{79.7} & \textbf{86.5} & \textbf{90.0} & \textbf{83.0} & \textbf{91.5} & \textbf{95.0} & \textbf{88.0} & \textbf{74.0} & \textbf{80.0} & \textbf{68.0} \\
\bottomrule
\end{tabular}
}
\end{table}

\textbf{\method~generalizes to out-of-domain tasks.} To test whether the primitives developed on tasks from one domain generalize to another, we sampled a subset of the primitive library where only primitives developed from other out-of-domain (OOD) tasks are retained during evaluation. For example, \ours~is evaluated on Terminal-bench but is paired with OOD primitives developed on SWE-bench. The primitives' implementation and composition contract remain unchanged while the application scope is adjusted on development tasks. As shown in Figure \ref{fig:domain_generalization}, while the Pass@1 performance of \ours~with OOD primitives underperforms that paired with in-domain (ID) primitives, its performance gain compared to the seed harness is still significant, achieving 11.1 points of Pass@1 gain when evaluated on the SWE-bench to Terminal bench setup. This generalizability of \method~is consistent with the scalability of primitive library size is shown in Figure \ref{fig:scaling}.

\begin{figure}[t]
\centering
\begin{subfigure}[t]{0.45\linewidth}
    \centering
    \includegraphics[width=\linewidth]{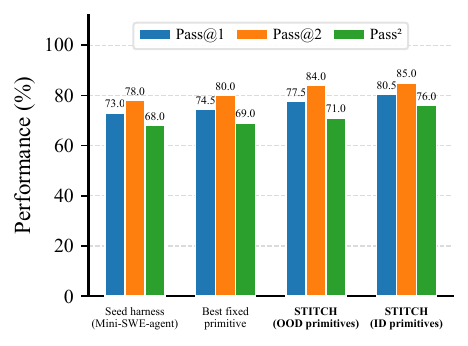}
\end{subfigure}
\begin{subfigure}[t]{0.45\linewidth}
    \centering
    \includegraphics[width=\linewidth]{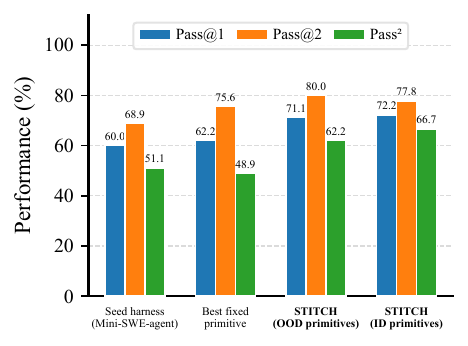}
\end{subfigure}
\caption{The cross domain generalization of \ours. \emph{Left}: The generalization of \ours~on SWE-bench with out-of-domain primitives developed on Terminal-bench. \emph{Right}: The generalization of \ours~on Terminal-bench with out-of-domain primitives developed on SWE-bench.}
\label{fig:domain_generalization}
\end{figure}

\textbf{Selected primitives activate during task execution.} We measure the frequencies of primitives that are selected and whether they are actually activated during the actor's task-solving trajectories. As shown in the left panel of Figure \ref{fig:select_cost}, while different primitives are selected at different rates, the activation rate, which measures whether the selected primitive is actually activated and executed without runtime errors on the actor agent, is stable at 100\% for all primitives. This empirically supports part 2 of Proposition \ref{thm:dominance}, demonstrating that \ours~is not only free of the exponential reliability risk of raw harness code generation, but also actually takes effect during the actor execution. We further show that the primitives are sparsely activated with the selection pattern in Appendix \ref{app:select_patn}.

\textbf{The cost of \ours~in building task-specific harnesses is minimal, down to 2.7\% of actor execution and 638$\times$ less than coding from scratch.} To quantify the cost of building task-specific harnesses, we compare \ours~with an optimistic lower bound for generating the complete harness implementation at test time. Specifically, we assume an oracle coding agent that emits valid code directly, with no input, reasoning, or debugging cost. Under this assumption, the cost is simply that of emitting the implementation's output tokens. As shown in the right panel of Figure~\ref{fig:select_cost}, \ours~costs only 4.3\% of one actor attempt on SWE-bench and 2.7\% on Terminal-Bench, whereas full-code generation costs 23.35 and 17.23 actor attempts, respectively, which is approximately 543 and 638 times the composition cost.

\begin{figure}[t]
\centering
\begin{subfigure}[t]{0.45\linewidth}
    \centering
    \includegraphics[width=\linewidth]{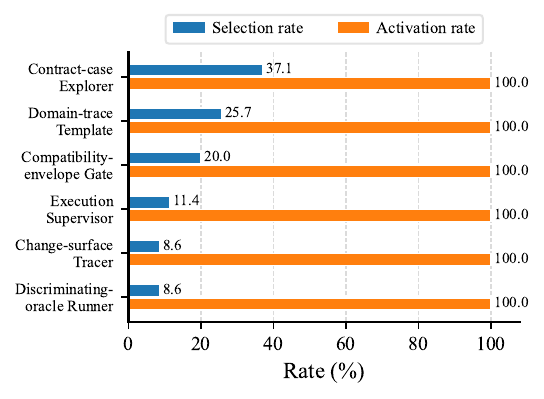}
\end{subfigure}
\begin{subfigure}[t]{0.45\linewidth}
    \centering
    \includegraphics[width=\linewidth]{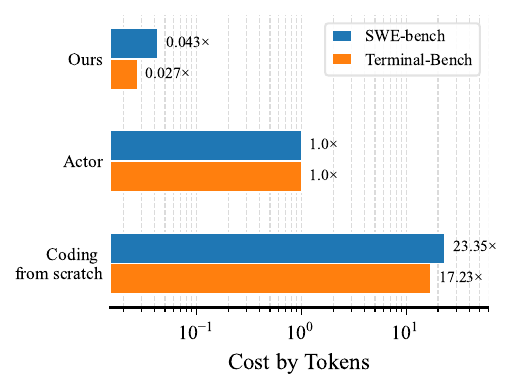}
\end{subfigure}
\caption{\emph{Left}: Despite different selection rates of primitives, all primitives are fully executable and activated at test time. \emph{Right}: The cost overhead of \ours~is as low as 2.7\% compared to the actor. In comparison, even the lower bound cost of coding the harness from scratch is 17.23$\times$ higher than the actor cost, making \ours~638$\times$ efficient.}
\label{fig:select_cost}
\end{figure}

\section{Conclusion}

We introduced \method, reusable harness mechanisms developed from recurring execution failures and equipped with application scopes and composition contracts. Based on these primitives, \ours~constructs task-specific harnesses by separating task-conditioned selection from deterministic compilation. Our analysis examines the task-mechanism mismatch induced by fixed choices and the execution risks of generating mechanism implementations at test time. Evaluation on held-out subsets of SWE-bench Verified and Terminal-Bench 2 shows improvements over the seed harness and surpasses human-designed harnesses such as Codex CLI, with composition costs of 4.3\% and 2.7\% of one actor attempt, respectively. The experiments also examine library expansion and transfer across actor models and task domains. Together, these results support reusable mechanisms and task-conditioned composition as an approach to task-specific harness construction.

\section*{AI use statement}

In preparing this manuscript, we used AI tools (Codex) only to polish the language and improve the clarity of the presentation. Their use was limited to correcting grammar, rephrasing author-written sentences, and improving readability and consistency of wording. The research questions, core ideas, methodology, experimental design, analyses, and scientific conclusions of this work were developed by the human authors, and no claims, results, or references were generated by these tools. Every AI-assisted edit was reviewed for correctness by at least two human authors. We take full responsibility for the final content of this work.

\bibliography{iclr2027_conference}
\bibliographystyle{iclr2027_conference}

\newpage
\appendix

\section{Proofs of Theoretical Results}

\subsection{Proof of Proposition \ref{thm:suboptimal}}

\begin{proposition*}[Fundamental Suboptimality of Monolithic Harnesses, Restated]
Assume $\mu_i \in (0,1)$, $\bar{u}_i > 0$, and $\bar{c}_i > 0$ for all $i \in \{1, \dots, M\}$. The expected performance of the optimal task-independent harness, $\mathbf{a}_{\text{fixed}}^* = \arg\max_{\mathbf{a} \in \{0, 1\}^M} \mathbb{E}_{x \sim \mathcal{D}}[R(x, \mathbf{a})]$, is strictly lower than that of the oracle task-adaptive harness $\mathbf{a}^*(x)$. The suboptimality gap $\Gamma_{\text{fixed}}$ is defined as:
\begin{equation*}
    \Gamma_{\text{fixed}} = \mathbb{E}_{x}[R(x, \mathbf{a}^*(x))] - \mathbb{E}_{x}[R(x, \mathbf{a}_{\text{fixed}}^*)] = \sum_{i=1}^M \min\left( \mu_i \bar{u}_i, \; (1 - \mu_i) \bar{c}_i \right) > 0
\end{equation*}
\end{proposition*}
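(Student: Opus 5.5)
The plan is to exploit the additive structure of $R(x,\mathbf{a})$, which makes both the fixed and the oracle optimization problems separable across primitives. Since
\[
R(x,\mathbf{a}) = R_0(x) + \sum_{i=1}^M a_i \Delta_i(x),
\]
the baseline term $R_0(x)$ appears identically in both expectations and cancels in $\Gamma_{\text{fixed}}$. The problem then reduces to $M$ independent one-dimensional comparisons, one for each $a_i \in \{0,1\}$.

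\textbf{Oracle adaptive harness.} For each task $x$, maximizing $\sum_i a_i\Delta_i(x)$ over $\mathbf{a}\in\{0,1\}^M$ gives $a_i^*(x)=\mathbf{1}(\Delta_i(x)>0)$. Because $u_i(x)>0$ on $\Omega_i$ and $-c_i(x)<0$ off $\Omega_i$, this is exactly $a_i^*(x)=\mathbf{1}(x\in\Omega_i)$. Hence
\[
\mathbb{E}_x\big[R(x,\mathbf{a}^*(x))\big]-\mathbb{E}_x[R_0(x)] = \sum_i \mathbb{E}\big[u_i(x)\mathbf{1}(x\in\Omega_i)\big] = \sum_i \mu_i\bar u_i .
\]
This uses the law of total expectation together with the definition of $\bar u_i$ as a conditional mean.

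\textbf{Optimal fixed harness.} For a constant vector $\mathbf{a}$, we have
\[
\mathbb{E}_x[R(x,\mathbf{a})]-\mathbb{E}[R_0] = \sum_i a_i\big(\mu_i\bar u_i-(1-\mu_i)\bar c_i\big),
\]
again by total expectation split over $\Omega_i$ and its complement. Since the objective is separable, the maximum is attained coordinatewise and equals $\sum_i \max\big(0,\ \mu_i\bar u_i-(1-\mu_i)\bar c_i\big)$. If ties occur, the argmax is not unique, but the optimal value is unchanged.

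\textbf{The gap.} Subtracting termwise gives
\[
\mu_i\bar u_i-\max\big(0,\ \mu_i\bar u_i-(1-\mu_i)\bar c_i\big)=\min\big(\mu_i\bar u_i,\ (1-\mu_i)\bar c_i\big).
\]
Summing over $i$ yields the claimed expression for $\Gamma_{\text{fixed}}$. Strict positivity follows because $\mu_i\in(0,1)$ and $\bar u_i,\bar c_i>0$ make each minimum strictly positive, and $M\ge1$.

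\textbf{Main obstacle.} The only delicate point is bookkeeping: justifying the exchange of expectation with the finite sum, and checking that the conditional means are well defined. This requires $\mu_i\in(0,1)$, so that both conditioning events have positive probability, and integrability of $u_i$ and $c_i$. Integrability follows from the stated assumption that $R$ lies in $[0,1]$, which bounds each $\Delta_i$. Apart from this, every step is elementary.
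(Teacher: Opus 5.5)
Your proposal is correct and follows the paper's proof essentially step for step: the pointwise oracle $a_i^*(x)=\mathbf{1}(x\in\Omega_i)$, the total-expectation formula $\mu_i\bar u_i-(1-\mu_i)\bar c_i$ for each fixed coordinate, coordinatewise maximization, and the identity $y-\max(0,y-z)=\min(y,z)$. Your extra remarks on ties in the argmax, on $M\ge 1$, and on integrability (via $\Delta_i(x)=R(x,\mathbf{e}_i)-R(x,\mathbf{0})\in[-1,1]$) are sound details that the paper leaves implicit.
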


\begin{proof}
Under the pointwise sign assumptions, one oracle activation maximizing $R(x,\mathbf{a})$ for each task is:
\begin{equation}
    a_i^*(x)=\mathbf{1}(x\in\Omega_i).
\end{equation}
The expected performance of the oracle adaptive harness is:
\begin{equation}
    \mathbb{E}_{x \sim \mathcal{D}}[R(x, \mathbf{a}^*(x))] = \mathbb{E}[R_0(x)] + \sum_{i=1}^M \mathbb{E}_{x}[\mathbf{1}(x \in \Omega_i) u_i(x)] = \mathbb{E}[R_0(x)] + \sum_{i=1}^M \mu_i \bar{u}_i
\end{equation}

For a fixed harness $\mathbf{a}_{\text{fixed}} \in \{0, 1\}^M$, the expected performance over $\mathcal{D}$ is:
\begin{equation}
    \mathbb{E}_{x \sim \mathcal{D}}[R(x, \mathbf{a}_{\text{fixed}})] = \mathbb{E}[R_0(x)] + \sum_{i=1}^M a_i \mathbb{E}_{x \sim \mathcal{D}}[\Delta_i(x)]
\end{equation}
Using the law of total expectation:
\begin{equation}
    \mathbb{E}_{x}[\Delta_i(x)] = \mathbb{P}(x \in \Omega_i)\mathbb{E}[u_i(x) \mid x \in \Omega_i] - \mathbb{P}(x \notin \Omega_i)\mathbb{E}[c_i(x) \mid x \notin \Omega_i] = \mu_i \bar{u}_i - (1 - \mu_i) \bar{c}_i
\end{equation}

To maximize this sum, the optimal fixed harness sets:
\begin{equation}
    a_{i, \text{fixed}}^* = \begin{cases} 1, & \text{if } \mu_i \bar{u}_i > (1 - \mu_i) \bar{c}_i \\ 0, & \text{otherwise} \end{cases}
\end{equation}
The optimal fixed expectation is therefore:
\begin{equation}
    \mathbb{E}_{x \sim \mathcal{D}}[R(x, \mathbf{a}_{\text{fixed}}^*)] = \mathbb{E}[R_0(x)] + \sum_{i=1}^M \max\left( 0, \; \mu_i \bar{u}_i - (1 - \mu_i) \bar{c}_i \right)
\end{equation}

Subtracting $\mathbb{E}_{x}[R(x, \mathbf{a}_{\text{fixed}}^*)]$ from $\mathbb{E}_{x}[R(x, \mathbf{a}^*(x))]$ yields the gap $\Gamma_{\text{fixed}}$:
\begin{equation}
    \Gamma_{\text{fixed}} = \sum_{i=1}^M \left( \mu_i \bar{u}_i - \max\left( 0, \; \mu_i \bar{u}_i - (1 - \mu_i) \bar{c}_i \right) \right)
\end{equation}
Using the identity $y - \max(0, y - z) = \min(y, z)$ for $y = \mu_i \bar{u}_i$ and $z = (1 - \mu_i) \bar{c}_i$:
\begin{equation}
    \Gamma_{\text{fixed}} = \sum_{i=1}^M \min\left( \mu_i \bar{u}_i, \; (1 - \mu_i) \bar{c}_i \right)
\end{equation}
Because $\mu_i \in (0, 1)$, $\bar{u}_i > 0$, and $\bar{c}_i > 0$, each term in the sum is strictly positive, proving $\Gamma_{\text{fixed}} > 0$.
\end{proof}

\subsection{Proof of Proposition \ref{thm:dominance}}

\begin{proposition*}[Conditional Dominance of Primitive-Based Composition, Restated]
Under the model in Section~\ref{sec:theory}, let $\bar{k}=\mathbb{E}_x[k(x)]=\sum_{i=1}^M\mu_i$. Primitive-based task-adaptive composition outperforms the specified alternatives under the following conditions:
\begin{enumerate}
\item \textbf{Dominance over Static Harnesses:} $\mathbb{E}[J(x,H_{\text{comp}})] >\mathbb{E}[J(x,H_{\text{fixed}}^*)]$ if and only if its utility-weighted selection loss is smaller than the fixed mismatch gap:
    \begin{equation}
        \sum_{i=1}^M
        \left[
            \mu_i\beta_i\bar{u}_i
            +(1-\mu_i)\alpha_i\bar{c}_i
        \right]
        <\Gamma_{\text{fixed}}.
    \end{equation}

\item \textbf{Dominance over Raw Code Generation:} For this part, assume $k(x)=\bar{k}$ for every task, although the selected mechanisms may differ across tasks. Suppose valid generated harnesses attain the oracle performance $R(x,\mathbf{a}^*(x))$. If $\mathbb{E}[J(x,H_{\text{comp}})]>0$, composition strictly outperforms this oracle generator if and only if
    \begin{equation}
        \bar{k}>k^*
        =
        \frac{
            \ln\left(
                \frac{\mathbb{E}[R(x,\mathbf{a}^*(x))]}
                     {\mathbb{E}[J(x,H_{\text{comp}})]}
            \right)
        }{-\ln(1-\epsilon)}.
    \end{equation}
For small $\epsilon$, the threshold satisfies
    \begin{equation}
        k^*\approx
        \frac{1}{\epsilon}
        \ln\left(
            \frac{\mathbb{E}[R(x,\mathbf{a}^*(x))]}
                 {\mathbb{E}[J(x,H_{\text{comp}})]}
        \right).
    \end{equation}
\end{enumerate}
\end{proposition*}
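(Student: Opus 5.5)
Both parts reduce to writing the relevant expected utilities in closed form under the additive model and then comparing them.

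\textbf{Part 1.} Since $V(H_{\text{comp}})=1$ and the fixed harness is built from the same validated primitives, so also has $V=1$, we have $J=R$ for both. Expanding the additive model with the final (possibly random) activations $a_i$ gives
\begin{equation*}
\mathbb{E}[R(x,\mathbf{a})]=\mathbb{E}[R_0(x)]+\sum_{i=1}^M\Big(\mathbb{E}[a_iu_i(x)\mathbf{1}(x\in\Omega_i)]-\mathbb{E}[a_ic_i(x)\mathbf{1}(x\notin\Omega_i)]\Big).
\end{equation*}
Condition on in-scope and out-of-scope tasks. The in-scope term equals $\mu_i\bar u_i-\mu_i\mathbb{E}[(1-a_i)u_i\mid x\in\Omega_i]$, which is $\mu_i\bar u_i(1-\beta_i)$. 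The out-of-scope term equals $(1-\mu_i)\alpha_i\bar c_i$. Both identities follow directly from the definitions of $\alpha_i,\beta_i$. Hence
\begin{equation*}
\mathbb{E}[J(x,H_{\text{comp}})]=\mathbb{E}[R(x,\mathbf{a}^*(x))]-\sum_i\big[\mu_i\beta_i\bar u_i+(1-\mu_i)\alpha_i\bar c_i\big],
\end{equation*}
using the oracle value $\mathbb{E}[R_0]+\sum_i\mu_i\bar u_i$ computed in the proof of Proposition~\ref{thm:suboptimal}. That proposition also gives $\mathbb{E}[J(x,H^*_{\text{fixed}})]=\mathbb{E}[R(x,\mathbf{a}^*(x))]-\Gamma_{\text{fixed}}$. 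Subtracting the two expressions gives the stated equivalence immediately.

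\textbf{Part 2.} By independence, conditional on $x$ the generated harness is valid with probability $(1-\epsilon)^{k(x)}=(1-\epsilon)^{\bar k}$, and a valid generated harness attains $R(x,\mathbf{a}^*(x))$. Since $V$ is determined by the generation randomness, whose success probability conditional on $x$ is constant in $x$, the tower property gives
\begin{equation*}
\mathbb{E}[J(x,H_{\text{gen}})]=(1-\epsilon)^{\bar k}\,\mathbb{E}[R(x,\mathbf{a}^*(x))].
\end{equation*}
The constant-$k$ assumption is what allows the factor to be pulled out; without it we would only get an expectation of a product.

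The composition harness strictly wins iff $\mathbb{E}[J_{\text{comp}}]>(1-\epsilon)^{\bar k}\mathbb{E}[R^*]$. One edge case needs separate treatment: if $\mathbb{E}[R^*]=0$, then $\mathbb{E}[J_{\text{comp}}]\le\mathbb{E}[R^*]=0$, so the hypothesis $\mathbb{E}[J_{\text{comp}}]>0$ excludes it. Otherwise both sides are positive, and I take logarithms. Because $\ln(1-\epsilon)<0$, dividing by it flips the inequality, which gives $\bar k>\ln(\mathbb{E}[R^*]/\mathbb{E}[J_{\text{comp}}])/(-\ln(1-\epsilon))=k^*$. The step most prone to error is exactly this sign flip, together with checking that both expectations are positive so the logarithm is defined.

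Finally, the approximation follows from $-\ln(1-\epsilon)=\epsilon+O(\epsilon^2)$. This yields $k^*\approx\epsilon^{-1}\ln(\cdot)$, with the ratio inside the logarithm being at least $1$ by Part 1's identity.
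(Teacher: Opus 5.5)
Your proposal is correct and follows the paper's proof essentially step for step. In Part 1 you decompose $\mathbb{E}[a_i\Delta_i(x)]$ into $\mu_i(1-\beta_i)\bar u_i-(1-\mu_i)\alpha_i\bar c_i$ and compare with $\mathbb{E}[R(x,\mathbf{a}^*(x))]-\Gamma_{\text{fixed}}$ from Proposition~\ref{thm:suboptimal}; in Part 2 you factor out $(1-\epsilon)^{\bar k}$ under constant $k(x)$ and take logarithms. You also spell out what the paper leaves implicit: the fixed harness has $V=1$, and $0<\mathbb{E}[J(x,H_{\text{comp}})]\le\mathbb{E}[R(x,\mathbf{a}^*(x))]$ follows from $\alpha_i,\beta_i\ge 0$ in the Part 1 identity.
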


\begin{proof}
Write $A=\mathbb{E}_x[R(x,\mathbf{a}^*(x))]$ and $B=\mathbb{E}[J(x,H_{\text{comp}})]$.

\textbf{Part 1: Dominance over Static Harnesses.} By the definitions of the utility-weighted error rates,
\begin{align}
    \mathbb{E}[a_i\Delta_i(x)]
    &=
    \mu_i\mathbb{E}[a_i u_i(x)\mid x\in\Omega_i]
    -(1-\mu_i)\mathbb{E}[a_i c_i(x)\mid x\notin\Omega_i]
    \nonumber\\
    &=
    \mu_i(1-\beta_i)\bar{u}_i
    -(1-\mu_i)\alpha_i\bar{c}_i.
\end{align}
Using the additive model and the assumed validity of composition,
\begin{equation}
    B=A-\sum_{i=1}^M
    \left[
        \mu_i\beta_i\bar{u}_i
        +(1-\mu_i)\alpha_i\bar{c}_i
    \right].
    \label{eq:comp_utility}
\end{equation}
Proposition~\ref{thm:suboptimal} gives $\mathbb{E}[J(x,H_{\text{fixed}}^*)]=A-\Gamma_{\text{fixed}}$. Comparing these expressions establishes the first condition.

\textbf{Part 2: Dominance over Raw Code Generation.} Under the conditional failure model and oracle-performance assumption,
\begin{equation}
    \mathbb{E}[J(x,H_{\text{gen}})]
    =
    \mathbb{E}_x\left[
        (1-\epsilon)^{k(x)}R(x,\mathbf{a}^*(x))
    \right]
    =
    (1-\epsilon)^{\bar{k}}A,
\end{equation}
where the last equality uses $k(x)=\bar{k}$ for every task. Since $0<B\leq A$,
\begin{equation}
    B>(1-\epsilon)^{\bar{k}}A
    \quad\Longleftrightarrow\quad
    \bar{k}>
    \frac{\ln(A/B)}{-\ln(1-\epsilon)}.
\end{equation}
The approximation follows from $-\ln(1-\epsilon)=\epsilon+O(\epsilon^2)$. For varying mechanism counts, the exact expectation must be retained; replacing $k(x)$ by its mean does not generally yield the required upper bound.
\end{proof}

\section{An Illustrative Primitive}
\label{app:primitive-example}

Figure~\ref{fig:state-guard-example} illustrates how a recurring failure in development trajectories motivates a reusable primitive, and how its scope and contract guide task-specific composition. State Guard preserves task-critical source files; the actor remains responsible for solving the task.

\begin{figure}[htbp]
\centering
\begin{tcolorbox}[
  colback=blue!2!white,
  colframe=blue!45!black,
  colbacktitle=blue!45!black,
  coltitle=white,
  title={State Guard: preserving task-critical inputs},
  fonttitle=\bfseries,
  boxrule=0.5pt, arc=1.5mm,
  left=7pt, right=7pt, top=5pt, bottom=5pt
]
\small
\setlength{\tabcolsep}{0pt}
\renewcommand{\arraystretch}{1.13}
\begin{tabularx}{\linewidth}{@{}>{\bfseries\raggedright\arraybackslash}p{0.19\linewidth}@{\hspace{9pt}}>{\raggedright\arraybackslash}X@{}}
\multicolumn{2}{@{}l@{}}{\textbf{From development failures to a reusable mechanism}} \\
\midrule
Failure cluster & Actor actions destroy inputs needed later: opening a database removes recovery evidence, or modifying a reference program invalidates subsequent comparisons. These failures share a missing source-preservation mechanism. \\
\midrule
Proposal & Preserve designated inputs across actor actions and report attempted modifications, so exploration does not irreversibly lose essential evidence. \\
\midrule
Implementation & Save original file contents and hashes before execution. Block recognized direct writes; check for indirect changes after each action. Restore changed files and return a failed-action observation naming the affected paths. \\
\midrule
Application scope & Select when local inputs are irreplaceable, actions plausibly risk modifying them, and outputs are separate. Exclude files that the task requires editing in place; merely having reference files is insufficient. \\
\midrule
Composition contract & \textit{Inputs:} source paths to protect and declared output paths. \textit{Outputs:} mutation reports and integrity-check results. \textit{Setup:} file access and storage for recoverable copies. \textit{Dependencies:} snapshot before the first actor action; check after each action; deliver violation feedback to the next actor request; expose the final integrity check to completion logic. \\
\midrule
\multicolumn{2}{@{}l@{}}{\textbf{Illustrative application: database recovery}} \\
\midrule
Task & Recover database records into a separate JSON file. The database and its write-ahead log (WAL), a companion file containing recovery evidence, must remain available. \\
\midrule
Seed-harness failure & The actor opens the original database before preserving its WAL. The operation removes recovery evidence, leaving the actor to guess missing records. Both recorded seed-harness attempts exhibited this failure. \\
\midrule
Guarded execution & The harness first snapshots the database and WAL. If an action changes or deletes either, State Guard restores the original contents and reports the action as failed. The feedback enables the actor to change its approach, for example by operating on a working copy while preserving the originals. \\
\midrule
Validation & Tests cover direct and indirect mutations while allowing output creation. In two development attempts on this task, the guard activated and both recovery outputs passed evaluation. Preservation alone does not establish that a recovered answer is correct. \\
\bottomrule
\end{tabularx}
\end{tcolorbox}
\caption{\textbf{An illustrative State Guard primitive.} The same file-preservation mechanism applies to different task-critical inputs without implementing a domain-specific solver. The case summarizes development records rather than reproducing a verbatim trajectory; it is not a controlled estimate of improvement or evidence of held-out generalization. Protection covers designated local file contents, not arbitrary external side effects.}
\label{fig:state-guard-example}
\end{figure}

\section{An Example of Composition Intent}
\label{app:composition-intent}

A composition intent specifies the primitives to use and their configuration; the compiler resolves their dependencies and constructs the executable graph. Figure~\ref{fig:composition-intent-example} illustrates the interface for a repository-repair task that needs repository context and verification feedback. The example follows the implemented intent schema and the compilation procedure in Section~\ref{sec:composition}.

\begin{figure}[htbp]
\centering
\begin{tcolorbox}[
 colback=blue!2!white,colframe=blue!45!black,
 colbacktitle=blue!45!black,coltitle=white,
 title={Composition intent: repository context and verified repair},
 fonttitle=\bfseries,boxrule=0.5pt,arc=1.5mm,
 left=7pt,right=7pt,top=5pt,bottom=5pt]
\small
\setlength{\tabcolsep}{0pt}
\renewcommand{\arraystretch}{1.12}
\begin{tabularx}{\linewidth}{@{}>{\raggedright\arraybackslash}p{0.30\linewidth}@{\hspace{9pt}}>{\raggedright\arraybackslash}X@{}}
\multicolumn{2}{@{}l@{}}{\textbf{Schema fields}}\\
\midrule
\texttt{strategy\_id}, \texttt{rationale} & Strings naming the composition and explaining the selection.\\
\texttt{primitives} & A list of instances, each with a unique local \texttt{alias}, a registered \texttt{primitive\_id}, and a \texttt{params} object.\\
\bottomrule
\end{tabularx}
\medskip
\textbf{Example intent (JSON)}
\begin{verbatim}
{
  "strategy_id": "repository_repair",
  "rationale": "Provide context and repair failed checks.",
  "primitives": [
    {
      "alias": "repository_context",
      "primitive_id": "build_repository_map",
      "params": {}
    },
    {
      "alias": "verification_feedback",
      "primitive_id": "repair_from_verifier_feedback",
      "params": {}
    }
  ]
}
\end{verbatim}
\end{tcolorbox}
\caption{\textbf{An illustrative composition intent.} The composer names reusable operations and explains their purpose; the deterministic compiler supplies the required dependencies and graph connections. Aliases identify instances within the intent, so the composer does not need to assign runtime node identifiers.}
\label{fig:composition-intent-example}
\end{figure}

\section{Harness Composer Prompt}
\label{app:composer-prompt}

Figure~\ref{fig:composer-prompt} presents the prompt used for initial harness composition. The composer receives the task, execution specification, seed graph, primitive catalog, and output schema before the actor starts. Its response wraps the composition intent illustrated in Figure~\ref{fig:composition-intent-example} with a Boolean decision indicating whether to change the seed graph and a short explanation.

\begin{figure}[htbp]
\centering
\begin{tcolorbox}[
 colback=blue!2!white,colframe=blue!45!black,
 colbacktitle=blue!45!black,coltitle=white,
 title={Composer prompt: selecting a task-specific harness},
 fonttitle=\bfseries,boxrule=0.5pt,arc=1.5mm,
 left=7pt,right=7pt,top=5pt,bottom=5pt]
\small
\textbf{System message}
\medskip

Using the passive task context, public issue statement, and primitive catalog, apply each primitive's choose\_only\_if criteria literally. Use repository and version only to interpret architecture or compatibility; never select a primitive merely because of repository identity. Use seed harness when no other primitive fully qualifies. If several focused primitives qualify but are incompatible, select the one whose evidence is needed earliest before editing. Do not solve the issue or predict difficulty. Return only schema-valid JSON with a brief causal rationale.

\medskip
\textbf{User message template}
\begin{verbatim}
OUTPUT_SCHEMA:
<JSON schema for GraphPlannerIntentProposal>
PLANNING_INPUT:
{
  "planning_mode": "initial_single_pass",
  "task": <task instruction and metadata>,
  "task_profile": <task profile, or null>,
  "environment_capabilities": <available capabilities>,
  "actor_model": <actor model and interaction interface>,
  "budget": <execution limits and remaining budget>,
  "active_graph": <compact seed-graph description>,
  "audit": <initial composition objective>,
  "primitive_catalog": <available primitive descriptions>,
  "intent_contract": <alias, primitive, rules>,
}
\end{verbatim}
\textbf{Expected response}
\medskip

Return \texttt{should\_mutate}, \texttt{rationale}, and \texttt{intent}. Set \texttt{should\_mutate=true} and provide a composition intent when a change is appropriate. Set \texttt{should\_mutate=false} and \texttt{intent=null} to retain the seed graph. A supplied intent follows the schema in Figure~\ref{fig:composition-intent-example}; it selects primitive instances and leaves mandatory dependency wiring to the compiler.
\end{tcolorbox}
\caption{Example harness-composer prompt.}
\label{fig:composer-prompt}
\end{figure}

\section{Implementation Details of Primitive Libraries and Compiler}
\label{app:primitive-catalog}

Tables~\ref{tab:app-swe-primitives} and~\ref{tab:app-terminal-primitives} introduce the primitives presented in Tables~\ref{tab:swe-luna} and~\ref{tab:terminal-random-test}, respectively. Each entry describes an application scope and the information or execution result that helps the actor choose its next action. The actor remains responsible for completing the task.

For the deterministic compiler, we note that each fixed-primitive baseline evaluates a fixed harness configuration containing one designated focused control mechanism, its required dependencies, and shared supporting operations, forming a minimal compatible set of operations handled by the compiler instead of an isolated primitive. The compiler also handles the integration of multiple primitives when multiple primitives are selected in the composition intent.

\section{Composer Selection Patterns}
\label{app:select_patn}

With the implementation details of \ours~detailed in Appendix \ref{app:primitive-catalog}, we additionally provide the primitive selection pattern of the composer in \ours~. As shown in Figure \ref{fig:select_cnt}, the composer selects 0 primitives, that is, the seed harness, for about half of the tasks given the readily competent performance of the seed harness Mini-SWE-agent. For the other half of the tasks, the majority of the tasks selects only a single primitive, with a few cases selecting more than one primitive. The combinations of these multi-primitive selections are shown in Figure \ref{fig:select_comb}. This relatively sparse selection pattern shows that \ours~follows the idea of applying harness mechanisms selectively instead of using a fixed harness and dense activations of all the mechanisms.

\begin{figure}[t]
    \centering
    \includegraphics[width=0.45\linewidth]{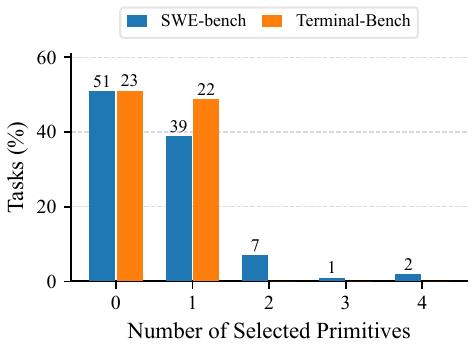}
    \vspace{-5mm}
\caption{The distribution of the selected primitives by the composer.}

    \label{fig:select_cnt}
\end{figure}

\begin{figure}[t]
    \centering
    \includegraphics[width=1\linewidth]{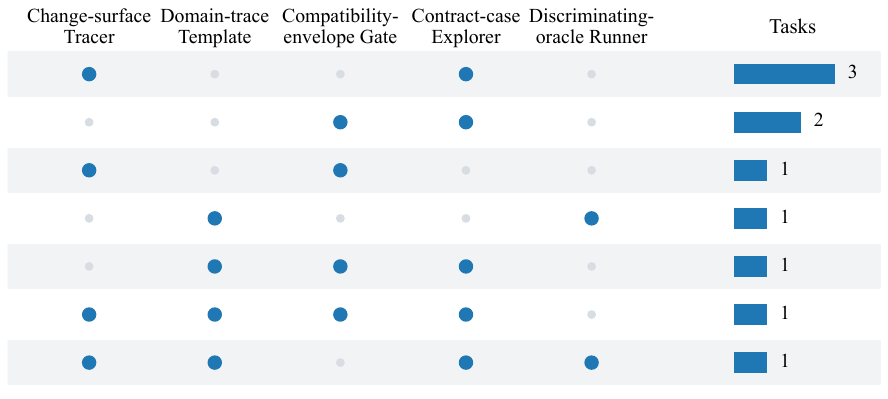}
    \vspace{-5mm}
\caption{The combination of multi-primitive selections by the composer.}

    \label{fig:select_comb}
\end{figure}

\begin{table}[htbp]
\centering\small
\renewcommand{\arraystretch}{1.18}
\setlength{\tabcolsep}{5pt}
\caption{The six SWE-bench primitives introduced in the main paper. Application scopes guide selection; the described checks and evidence guide repository inspection, editing, and verification.}
\label{tab:app-swe-primitives}
\begin{tabularx}{\linewidth}{@{}>{\raggedright\arraybackslash}p{0.22\linewidth}>{\raggedright\arraybackslash}p{0.29\linewidth}>{\raggedright\arraybackslash}X@{}}
\toprule
\textbf{Primitive} & \textbf{Application scope} & \textbf{Operation and use of its output}\\
\midrule
Change-surface Tracer & One public behavior spans coupled implementations or representations that must obey the same rule. & Identifies the behavior owner and an alternate implementation path to inspect. The resulting checks direct attention to related edits or tests that a single-file repair could miss.\\
\addlinespace[5pt]
Compatibility-envelope Gate & A requested change must preserve a concrete legacy input, entry point, or previously supported behavior. & Pairs the requested behavior with preservation and alternate legacy checks. These checks make the compatibility requirement explicit when the actor chooses and verifies a patch.\\
\addlinespace[5pt]
Contract-case Explorer & A nearby case or an interaction between features can expose an incomplete interpretation of the requested behavior. & Develops a reported case, a preservation control, and a contrasting alternate case. Their expected observations help distinguish a narrow patch from a repair covering the behavioral boundary.\\
\addlinespace[5pt]
Discriminating-oracle Runner & An ordinary return-value or exception check could pass while a relevant side effect remains wrong. & Specifies a focused observation of behavior such as mutation, object identity, callback order, or generated structure, together with a preservation control. This gives the actor a more discriminating correctness check.\\
\addlinespace[5pt]
Domain-trace Template & A value or reference crosses changes in meaning, ownership, or lifecycle state. & Organizes checks around the relevant transitions and an invariant that should survive them. The trace guides the actor toward the point where the value first acquires the wrong meaning or state.\\
\addlinespace[5pt]
Environment-capability Runner & A missing executable, dependency, backend, locale, or operating-system facility may prevent the reported behavior from being exercised. & Specifies a capability probe alongside behavior and preservation checks. The probe helps the actor distinguish unavailable environment support from an observed defect before interpreting a test result.\\
\bottomrule
\end{tabularx}
\end{table}

\begin{table}[htbp]
\centering\small
\renewcommand{\arraystretch}{1.18}
\setlength{\tabcolsep}{5pt}
\caption{The five Terminal-Bench primitives introduced in the main paper. Each operation uses inputs available from the task or ordinary actor execution and returns evidence for the next action or completion check.}
\label{tab:app-terminal-primitives}
\begin{tabularx}{\linewidth}{@{}>{\raggedright\arraybackslash}p{0.22\linewidth}>{\raggedright\arraybackslash}p{0.29\linewidth}>{\raggedright\arraybackslash}X@{}}
\toprule
\textbf{Primitive} & \textbf{Application scope} & \textbf{Operation and use of its output}\\
\midrule
State Guard & Irreplaceable source files may be modified by task commands, while the required output is stored separately. & Saves source contents and hashes, detects changes, and restores modified files. Reports the affected paths so the actor can work on a copy or a separate output; checks source integrity before completion.\\
\addlinespace[5pt]
Check Replay & A concrete executable check is available and should remain valid as the actor revises the deliverable. & Retains the check command and its execution context, then reruns the same check at completion. Records the result and restores local workspace writes made by the check, exposing regressions without weakening the acceptance criterion.\\
\addlinespace[5pt]
First-Failure Localizer & Reference and candidate commands produce comparable ordered observations whose first difference can identify a repair target. & Runs both commands from the same initial workspace state and compares exit status, standard output, and error output. Returns the matching prefix and first difference to focus the actor's next investigation.\\
\addlinespace[5pt]
Experiment Keeper & Iterative search produces candidate files that can be compared using a numeric objective with a known optimization direction. & Repeatedly measures valid candidates, compares their median scores, and retains the best candidate's contents. Restores the retained candidate after a worse trial and remeasures it before completion.\\
\addlinespace[5pt]
Execution Supervisor & A background job or persistent service needs explicit lifecycle management, including a readiness check when appropriate. & Manages the process group, deadline, logs, and cleanup. Records job completion or probes service readiness, allowing the actor to proceed using an observed execution result rather than repeated manual launch-and-poll steps.\\
\bottomrule
\end{tabularx}
\end{table}

\end{document}